\def\BibTeX{{\rm B\kern-.05em{\sc i\kern-.025em b}\kern-.08em
    T\kern-.1667em\lower.7ex\hbox{E}\kern-.125emX}}
\newtheorem{thm}{Theorem}
\newtheorem{lem}{Lemma}
\newtheorem{defn}{Definition}
\newcommand{\beq}{\begin{equation}}
\newcommand{\eeq}{\end{equation}}
\newcommand{\beqa}{\begin{eqnarray}}
\newcommand{\eeqa}{\end{eqnarray}}
\newcommand{\norm}[1]{\Vert#1\Vert}
\begin{document}
 
 
\title{\textbf{Universal Collaboration Strategies for Signal Detection: A Sparse Learning Approach}}
\author{\IEEEauthorblockN{Prashant Khanduri\IEEEauthorrefmark{1}, Bhavya Kailkhura\IEEEauthorrefmark{1}, Jayaraman J. Thiagarajan\IEEEauthorrefmark{2} and Pramod K. Varshney\IEEEauthorrefmark{1}}
\IEEEauthorblockA{\IEEEauthorrefmark{1}Department of EECS, Syracuse University, NY, 13244 USA  \\
Email:\{pkhandur,bkailkhu,varshney\}@syr.edu}
\IEEEauthorblockA{\IEEEauthorrefmark{2}Lawrence Livermore National Laboratory\\
Email:jayaramanthi1@llnl.gov}}

\maketitle
\begin{abstract}
This paper considers the problem of high dimensional signal detection in a large distributed network whose nodes can collaborate with their one-hop neighboring nodes (spatial collaboration). We assume that only a small subset of nodes communicate with the Fusion Center (FC). We design optimal collaboration strategies which are universal for a class of deterministic signals. By  establishing the equivalence between the collaboration strategy design problem and sparse PCA, we solve the problem efficiently and evaluate the impact of collaboration on detection performance.
\end{abstract}
\begin{IEEEkeywords}
universal collaboration, dimensionality reduction, sparse learning, multi-task detection
\end{IEEEkeywords}
\section{Introduction}
In a conventional signal detection problem, the goal is to design a system for detecting a specific signal of interest~\cite{Kay_1998}. The performance of such systems degrades if the signal evolves over time or for other known signals. Due to the advent of Big Data applications, modern detection systems are expected to perform signal detection tasks for different signal models. Hence, it is desirable to build a universal system which is flexible enough to generalize to several signal models. This paper considers a Wireless Sensor Network (WSN) consisting of a number of sensors and a FC. WSNs often operate with severe resource limitations. Consequently, minimizing the system complexity in terms of communication is critical \cite{Akyildiz_ComMag_2002}. For example, resources can be conserved if the nodes do not transmit irrelevant or redundant data. 
Such transmissions can be avoided through dimensionality reduction \cite{Fodor02DRSurvey}. The problem of dimensionality reduction at local sensors was considered in the context of distributed estimation in \cite{Schizas_TSP_2007,Xiao_Cui_TSP_2008} and distributed detection in \cite{Fang_Li_TSP_2012,Fang_Liu_SPL_2013}

Moreover, in certain systems, sensors can collaborate with their one-hop neighbors and form a network wide low dimensional projection of the observed signal. The resulting low-dimensional projection of measurements is transmitted by a small subset of sensors to the FC. Some variants of this idea have been used in the distributed estimation literature~\cite{Fang_TWC_2009,Kar_TIT_2013,Liu_TSP_2015}.

In large networks, it is not always feasible to modify the collaboration strategy for each and every sensor for different signal detection tasks. Moreover, the sensors are designed to acquire data pertinent to a hypothesis test without being aware of the signal model. In such scenarios, a practical approach is to design a universal collaboration strategy which is effective for a broad class of signals. To the best of our knowledge, there is no work which considers the design of cost constrained linear collaboration among sensor nodes for detection problems even for a single signal of interest. In this letter, we take some first steps towards the design of universal collaboration strategies for high-dimensional signal detection and seek to answer the following questions: $1)$ How much do we gain from optimizing the collaboration strategy? $2)$ What is the effect of dimensionality reduction for different sparsity constraints? $3)$ How much do we lose in terms of detection performance by adopting a universal system?


%
%
In this letter, we show that the problem of designing an effective collaboration strategy can be viewed as dimensionality reduction, wherein the goal is to reduce signal dimensions by collaboration such that performance is maximized. In particular, we establish an equivalence to Principal Component Analysis (PCA) \cite{Smith_2002}, a popular linear dimensionality reduction technique. Though collaboration is an effective strategy, it directly results in an increased power budget, and a complex network design. Consequently, we propose to impose sparsity constraints to control the cost of collaboration.

The main contributions of the paper can be summarized as follows:

\begin{itemize} 
\item We propose a universal signal detection framework with spatial collaboration and define the cumulative deflection coefficient (C-DC) metric to characterize its detection performance.
\item We establish the equivalence between C-DC maximization and Principal Component Analysis (PCA).
\item We empirically characterize the trade-off between the achievable performance of the proposed framework and the cost of collaboration and dimensionality reduction.
\item Finally, by defining a metric to quantify the cost of universality, we study the price one pays for universality with respect to the inference performance.
\end{itemize}
\section{Collaboration Strategies for Signal Detection}

\subsection{Hypothesis Testing}
Consider a distributed sensor network designed to determine the presence or the absence of a high-dimensional signal $\mathbf{s}$. $N$ sensors each sensing a scalar variable combine to sense an $N$ dimensional signal $\mathbf{s}$,
\begin{align} \label{eq:VSD}
H_0: ~~\mathbf{x}&=\mathbf{n},  \nonumber \\
H_1: ~~\mathbf{x}&= \mathbf{s} + \mathbf{n}, 
\end{align}
where, $\mathbf{x} \in \mathbb{R}^{N}$ is the observed signal,  $\mathbf{n} \sim \mathcal{N}(0, \sigma^2 \mathbf{I}_N)$ is the additive white Gaussian noise (AWGN) with covariance $\sigma^2\mathbf{I}_N $ and $\mathbf{s} \in \mathbf{R}^N$ is the signal of interest. 

\subsection{Collaboration for Distributed Detection}
\subsubsection{Distributed Detection}
Consider a parallel network with $N$ sensing nodes where each node can forward its observation of the signal of interest $\mathbf{s}$ in noise to the Fusion Center through a noiseless communication link. 
The FC then processes the observed data and decides in favor of $H_0$ or $H_1$. However, in large networks, due to a variety of reasons including power budget and network design,
it may not always be possible for all the sensing nodes to communicate to the FC. We propose to alleviate this fundamental challenge by using collaboration schemes.
  
\subsubsection{Collaboration Schemes}
We begin by assuming that only a subset $M$ of the $N$ sensing nodes, where $M<<N$, are allowed to transmit to the FC to possibly conserve energy.
In addition, these nodes have the ability to update their observations through collaboration, which refers to the process of combining their observations with those from their one-hop neighboring nodes. 
Without loss of generality, we assume that the nodes are ordered such that only the first $M$ nodes can communicate with the
FC. We define $\mathbf{W}\in \mathbb{R}^{M \times N}$ as the collaboration matrix whose elements correspond to the weights to combine the node observations. 
Note that, $\mathbf{W}$ projects the high-dimensional signal $\mathbf{x}\in \mathbb{R}^N$ onto $\mathbf{y} \in \mathbb{R}^M$ as $\mathbf{y}=\mathbf{W} \mathbf{x}$, where $M \leq N$, as shown in Fig. \ref{fig:system_model}.

 The FC performs a hypothesis test and infers a global decision about the signal of interest solely based on the $M$ low-dimensional measurements $\mathbf{y}$. 
The goal of the designer is to design an optimal collaboration matrix $\mathbf{W}$ such that the detection performance of the system is maximized.
 \begin{figure}[t]
 \begin{center} 
 {\includegraphics[scale=0.55]{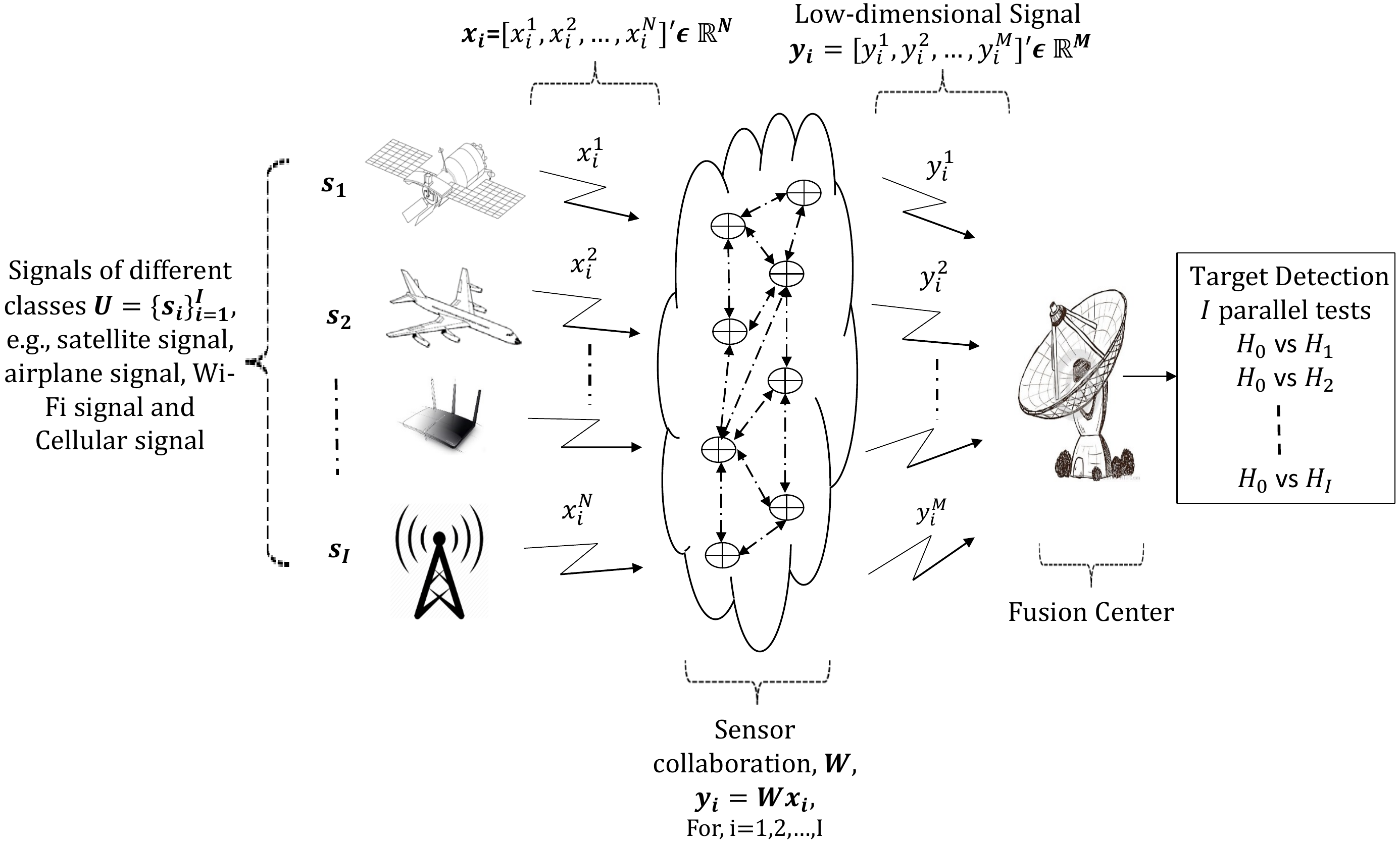}} 
 \caption{System model for the proposed distribution detection framework. It is assumed that only $M$ out of the $N$ total nodes can transmit to the Fusion Center, and they have the ability to collaborate with their one-hop neighboring nodes. The spatial collaboration process is modeled as a linear projection with the weight matrix $\mathbf{W}$.}
 \label{fig:system_model}
 \end{center} 
 \vspace{-0.1in}
 \end{figure} 
 For clarity of exposition, we first formulate this problem for the case of a single signal of interest. In this formulation, we use deflection coefficient as the performance metric. It is well known that the maximization of the deflection coefficient at the FC is equivalent to the minimization of the probability of error. The design problem for detecting a known signal $\mathbf{s}$ is  
\begin{align}
\underset{W}{\text{maximize}} ~~  \mathbf{s}^T \mathbf{W}^T \left(\mathbf{W}\mathbf{W}^T \right)^{-1} \mathbf{W} \mathbf{s}.
\end{align}
As we will see later, the solution to this problem is trivial and can be handled as a special case of Lemma \ref{lem:PCA} for $I=1$.
In the next section, we generalize this setup to obtain universal collaboration strategies for a broader class $\mathcal{U}$ of signals $\{\mathbf{s}_i\}_{i=1}^{I}$. We assume that the signal $\mathbf{s}$ belongs to a class $\mathcal{U}= \{\mathbf{s}_i \}_{i=1}^I$ of signals where $\mathbf{s}_i$ are deterministic and the FC has the knowledge of the elements of set $\mathcal{U}$ (Fig. \ref{fig:system_model}). This model has practical applications in the context of several big data problems and has also been considered in \cite{Davenport_Baraniuk_2010, Kailkhura_Arxiv_2015}.

For clarity of exposition, we illustrate one instance of the application in Figure \ref{fig:system_model}. In Figure \ref{fig:system_model}, the goal of the FC is to detect the presence or the absence of signals emitted by $I$ parallel data sources where signals come from a class $ \mathcal{U}=\{\mathbf{s}_i\}_{i=1}^I$. The signal model $\mathcal{U}$ is known to the FC, i.e., in the figure the FC knows that it is detecting signals/objects such as satellite, airplane, cellular base station signal and Wi-Fi signal all of which are assumed to be deterministic for this example specifically.


\subsection{Universal Collaboration Strategies}

\subsubsection{Performance Metrics}
We assume that the signal under the alternate hypothesis $H_1$ can come from a set of equally probable signals, $\{\mathbf{s}_i\},\;i=1,\cdots,I$. To characterize the detection performance of the system, we define the following metric:
 
\begin{defn}{\textbf{(Cumulative Deflection Coefficient)}}\label{def:CDC}
We define Cumulative Deflection Coefficient (C-DC) for a 
signal class $\mathcal{{U}}=\{\mathbf{s}_i\}_{i=1}^I$ as
 \begin{align}
  \text{C-DC}= \sum_{i=1}^I \mathbf{s}_i^T \mathbf{W}^T \left(\mathbf{W}\mathbf{W}^T \right)^{-1}\mathbf{W}\mathbf{s}_i,
  \end{align}which is the summation of individual deflection coefficients for each $ \mathbf{s}_i$, 
  \label{def:TotalDC}
  \end{defn}We propose to maximize C-DC, which takes into account the cumulative detection performance of the system for all $I$ signals. Note that a universal collaboration design will incur a certain level of loss in terms of detection performance. For this purpose, we define a metric to measure the cost of universality that quantifies the performance loss of the system as $I$ increases.
 
 \begin{defn}{\textbf{(Cost of Universality)}}\label{def:CU}
The Cost of Universality ($C_u(I)$) is the performance loss when using a single collaboration strategy for a set of $I$ signals. It is characterized by
\begin{align}
C_u(I)= \frac{\text{C-DC}}{\sum_{i=1}^I \mathbf{s}_i^T \mathbf{s}_i}
\end{align} 
 as the number of signals $I$ increases. 
 \end{defn}The denominator represents the summation of deflection coefficients when the collaboration strategy is optimized separately for each signal (Lemma 2 of \cite{Kailkhura_2015}). On the other hand, the numerator C-DC is the deflection coefficient when we use a universal collaboration strategy $\mathbf{W}$ for all $I$ signals.  
  Now, using the Cauchy-Schwartz inequality, we get
 \begin{align}
 \norm{\mathbf{P_w}\mathbf{s}_i}_2^2 &\leq \norm{\mathbf{P_w}\mathbf{s}_i}_2 \norm{\mathbf{s}_i}_2,
 \end{align}
 where $\mathbf{P_w}=\mathbf{W}^T ( \mathbf{W}\mathbf{W}^T)^{-1} \mathbf{W}$. Hence, $\norm{\mathbf{P_w}\mathbf{s}_i}_2 \leq \norm{\mathbf{s}_i}_2$, which implies that $\text{C-DC} \leq \sum_{i=1}^I \norm{\mathbf{s}_i}_2^2=\sum_{i=1}^I \mathbf{s}_i^T\mathbf{s}_i$.
 
When the $i$th sensor shares its information as indicated by the collaboration matrix $\mathbf{W}$, it will incur a finite cost $\gamma_i$ arising due to practical considerations such as power consumption. In practice, it is desirable to minimize this cost, referred to as the cost of collaboration.

\begin{defn}{\textbf{(Cost of Collaboration)}}\label{def:CC}
 We define the cost of collaboration in our detection system as $C_c=\sum_{i=1}^M |\gamma_i|$, where $\gamma_i$ is the cost for communication as specified by the $i^{\text{th}}$ row of the collaboration matrix $\mathbf{W}$. 
 \end{defn}
 Broadly speaking, there is a trade-off between the detection performance and the cost efficiency of a system. As the number of nodes capable of transmitting to the FC ($M$) increases, the detection performance will improve. On the other hand, if the collaboration cost $\gamma_i$ increases, the detection performance is expected to degrade, as less number of resources (communication links) can be used under a fixed cost budget. 
 
  \section{Optimal Universal Collaboration Strategies for Signal Detection}
\subsection{Randomized Collaboration Scheme}
A simple approach to design the collaboration matrix $\mathbf{W}$ is to use a random construction where elements of $\mathbf{W}$ are generated from a certain probability density function. In this paper, we approximate the performance of random collaboration schemes using the concept of $\delta$-Stable Embedding:
\begin{defn}{\textbf{($\delta$-Stable Embedding)}}\label{def:deltaStableEmbedding}
 \cite{Davenport_Baraniuk_2010}, A matrix $\mathbf{V} \in \mathbb{R}^{M \times N}$ satisfies the $\delta$-Stable Embedding property for $ \mathcal{U} \subset \mathbb{R}^N$ if,
 \begin{align}
 (1-\delta) \norm{\mathbf{s}_i}_2^2\leq  \norm{\mathbf{V}\mathbf{s}_i}_2^2 \leq (1+ \delta)\norm{\mathbf{s}_i}_2^2
 \end{align}
 where $\delta \in (0,1)$ and $\mathbf{s}_i \in \mathcal{U}$.
 \end{defn}Note that several random constructions guarantee that $\sqrt{\frac{M}{N}}\mathbf{P_w}$ will satisfy the $\delta$-stable embedding property with high probability. Using this concept, we state our result in the next lemma.
 \begin{lem} \label{lem:StableEmbedding}
For a random collaboration scheme $\mathbf{W}$, where $\sqrt{\frac{M}{N}} \mathbf{P_w}$ satisfies $\delta$-stable embedding property, the cumulative deflection coefficient, C-DC as given in Definition \ref{def:CDC}, can be approximated as 
   \begin{align}
    \text{C-DC}= \sum_{i=1}^I \mathbf{s}_i^T \mathbf{W}^T \left(\mathbf{W}\mathbf{W}^T \right)^{-1}\mathbf{W}\mathbf{s}_i \approx \frac{M}{N} \sum_{i=1}^I\norm{\mathbf{s}_i}_2^2 .
   \end{align}
   \end{lem}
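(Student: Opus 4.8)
The plan is to reduce the quadratic form defining C-DC to a sum of squared projection norms and then invoke the $\delta$-stable embedding hypothesis term by term. The first step is purely algebraic: I would verify that $\mathbf{P_w}=\mathbf{W}^T(\mathbf{W}\mathbf{W}^T)^{-1}\mathbf{W}$ is the orthogonal projector onto the row space of $\mathbf{W}$. It is symmetric, since $(\mathbf{W}\mathbf{W}^T)^{-1}$ is symmetric, and idempotent, since $\mathbf{P_w}^2=\mathbf{W}^T(\mathbf{W}\mathbf{W}^T)^{-1}(\mathbf{W}\mathbf{W}^T)(\mathbf{W}\mathbf{W}^T)^{-1}\mathbf{W}=\mathbf{P_w}$. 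These two properties let me rewrite each summand of C-DC as
\begin{align}
\mathbf{s}_i^T \mathbf{W}^T(\mathbf{W}\mathbf{W}^T)^{-1}\mathbf{W}\mathbf{s}_i = \mathbf{s}_i^T \mathbf{P_w}\mathbf{s}_i = \mathbf{s}_i^T \mathbf{P_w}^T\mathbf{P_w}\mathbf{s}_i = \norm{\mathbf{P_w}\mathbf{s}_i}_2^2,
\end{align}
so that $\text{C-DC}=\sum_{i=1}^I \norm{\mathbf{P_w}\mathbf{s}_i}_2^2$.

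Next I would apply Definition \ref{def:deltaStableEmbedding} to the scaled projector. Taking $\mathbf{V}$ to be the appropriately normalized multiple of $\mathbf{P_w}$ pulls the factor $M/N$ out of the norm, and the stable-embedding sandwich converts each term into a two-sided bound,
\begin{align}
\frac{M}{N}(1-\delta)\norm{\mathbf{s}_i}_2^2 \leq \norm{\mathbf{P_w}\mathbf{s}_i}_2^2 \leq \frac{M}{N}(1+\delta)\norm{\mathbf{s}_i}_2^2.
\end{align}
Summing over $i$ from $1$ to $I$ and using linearity gives
\begin{align}
\frac{M}{N}(1-\delta)\sum_{i=1}^I\norm{\mathbf{s}_i}_2^2 \leq \text{C-DC} \leq \frac{M}{N}(1+\delta)\sum_{i=1}^I\norm{\mathbf{s}_i}_2^2,
\end{align}
and letting $\delta$ be small collapses both bounds to the claimed approximation $\text{C-DC}\approx \frac{M}{N}\sum_{i=1}^I\norm{\mathbf{s}_i}_2^2$.

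I do not expect a serious obstacle here, since once $\mathbf{P_w}$ is identified as an orthogonal projector the argument is mechanical. The one point that needs care is the bookkeeping of the scaling constant: for the right-hand side to carry a factor $M/N$ rather than its reciprocal, the stable-embedding property must be read so that $\norm{\mathbf{P_w}\mathbf{s}_i}_2^2 \approx \frac{M}{N}\norm{\mathbf{s}_i}_2^2$, which is exactly the fraction of signal energy retained when projecting onto an $M$-dimensional subspace of $\mathbb{R}^N$. This is also the sanity check worth flagging: since $\norm{\mathbf{P_w}\mathbf{s}_i}_2 \leq \norm{\mathbf{s}_i}_2$ was already established via the Cauchy--Schwarz step, and $M \leq N$, the resulting estimate is consistent with that upper bound, whereas a factor $N/M>1$ would violate it. Finally I would note that the symbol $\approx$ is shorthand for the two-sided sandwich above, which becomes exact as $\delta \to 0$; no probabilistic argument is needed within the proof itself, as the high-probability guarantee on the embedding is inherited from the cited random constructions.
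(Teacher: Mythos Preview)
Your proposal is correct and is exactly the approach the paper intends: the paper's entire proof is the single sentence ``The proof follows from the $\delta$-stable embedding property of Definition~\ref{def:deltaStableEmbedding},'' and you have simply written that derivation out in full. Your flag on the scaling constant is also well taken---literally applying Definition~\ref{def:deltaStableEmbedding} with $\mathbf{V}=\sqrt{M/N}\,\mathbf{P_w}$ produces a factor $N/M$ rather than $M/N$, so the hypothesis should be read with $\sqrt{N/M}\,\mathbf{P_w}$ for your sandwich (and the claimed approximation) to come out, and your sanity check via $\norm{\mathbf{P_w}\mathbf{s}_i}_2\le\norm{\mathbf{s}_i}_2$ confirms this is the intended reading.
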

   \begin{proof}
   The proof follows from the $\delta$-stable embedding property of Definition \ref{def:deltaStableEmbedding}. 
   \end{proof} 


\subsection{Cost-Free Collaboration Strategy Design} 
In this section, we present a cost-free universal collaboration strategy, i.e., without taking into account the cost of collaboration.    
Our goal of maximizing the cumulative deflection coefficient, C-DC, can be formulated as
\begin{align}
 \text{P1:}  \quad\underset{W}{\text{maximize}} \quad  \sum_{i=1}^I \mathbf{s}_i^T\mathbf{W}^T ( \mathbf{W}\mathbf{W}^T)^{-1} \mathbf{W}\mathbf{s}_i.
 \end{align}
One direct approach to solve the optimization problem (P1) is to use semidefinite relaxation (SDR). However, such approaches are computationally expensive and cannot guarantee optimality of the solution. Furthermore, similar approaches reported in \cite{Kar_TIT_2013} and \cite{Liu_TSP_2015}, vectorize the collaboration design matrix $\mathbf{W}$ (eq. $17 (a)$ of \cite{Liu_TSP_2015}). As a consequence, we lose the ability to enforce row/column wise cost penalties. Matrix norm-based penalties are crucial for designing collaboration matrices for distributed  networks as they capture the heterogeneous aspects of the network. Interestingly, the optimization problem (P1) is equivalent to linear dimensionality reduction (from $\mathbb{R}^N$ to $\mathbb{R}^{M}$ where $M \leq N$) with a closed form solution. 
\begin{thm} \label{lem:stiefel_manifold}
The optimization problem (P1) is equivalent to Principal Component Analysis in the sense that 
\begin{small}
\begin{equation}
\underset{\mathbf{W}}{\text{max}} \;\;  \sum_{i=1}^I \mathbf{s}_i^T\mathbf{W}^T ( \mathbf{W}\mathbf{W}^T)^{-1} \mathbf{W}\mathbf{s}_i= \underset{\mathbf{W}^T \in \mathbf{S}_M^N}{\text{max}}  \text{Tr}\left( \mathbf{W} \boldsymbol{\Omega} \mathbf{W}^T \right) \nonumber
\end{equation}
\end{small} 
where, $\boldsymbol{\Omega}= \sum_{i=1}^I \mathbf{s}_i\mathbf{s}_i^T$ and $\mathbf{S}_M^N$ is the Stiefel manifold defined as $\mathbf{S}_M^N=\{\mathbf{W}^T \in \mathbb{R}^{N \times M}| \mathbf{W}\mathbf{W}^T=\mathbf{I}_M\}$.
 \end{thm}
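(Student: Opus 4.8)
The plan is to exploit the fact that the objective in (P1) depends on $\mathbf{W}$ only through the orthogonal projector $\mathbf{P_w}=\mathbf{W}^T(\mathbf{W}\mathbf{W}^T)^{-1}\mathbf{W}$ onto the row space of $\mathbf{W}$, and that this projector is invariant to the particular basis chosen for that row space. First I would restrict attention to matrices $\mathbf{W}$ of full row rank $M$, since otherwise $(\mathbf{W}\mathbf{W}^T)^{-1}$ fails to exist; these are exactly the feasible points of (P1). The key algebraic observation is that for any invertible $\mathbf{Q}\in\mathbb{R}^{M\times M}$, replacing $\mathbf{W}$ by $\mathbf{Q}\mathbf{W}$ leaves $\mathbf{P_w}$ unchanged, because upon expanding $(\mathbf{Q}\mathbf{W}\mathbf{W}^T\mathbf{Q}^T)^{-1}=(\mathbf{Q}^T)^{-1}(\mathbf{W}\mathbf{W}^T)^{-1}\mathbf{Q}^{-1}$ the inner factors cancel against the outer $\mathbf{Q}$ and $\mathbf{Q}^T$. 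Hence the entire objective is a function of the row space of $\mathbf{W}$ alone.

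Using this invariance, I would show that maximizing over all full-rank $\mathbf{W}$ yields the same value as maximizing over the constrained set $\{\mathbf{W}:\mathbf{W}\mathbf{W}^T=\mathbf{I}_M\}$, i.e. $\mathbf{W}^T\in\mathbf{S}_M^N$. Concretely, given any full-rank $\mathbf{W}$ the whitening choice $\mathbf{Q}=(\mathbf{W}\mathbf{W}^T)^{-1/2}$ produces $\tilde{\mathbf{W}}=\mathbf{Q}\mathbf{W}$ satisfying $\tilde{\mathbf{W}}\tilde{\mathbf{W}}^T=\mathbf{I}_M$ with an identical objective value, while every matrix with orthonormal rows is already feasible for (P1). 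Thus the two maxima coincide. This reduction is the step I expect to be the main obstacle, since it requires arguing that no objective value is lost by imposing orthonormality and that the feasible set is nonempty, both of which rest on the invariance established above.

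Finally, on the Stiefel manifold the constraint $\mathbf{W}\mathbf{W}^T=\mathbf{I}_M$ makes the inverse trivial, so the objective collapses to $\sum_{i=1}^I \mathbf{s}_i^T\mathbf{W}^T\mathbf{W}\mathbf{s}_i$. I would then rewrite each scalar summand as a trace and apply the cyclic property together with linearity:
\[
\sum_{i=1}^I \mathbf{s}_i^T\mathbf{W}^T\mathbf{W}\mathbf{s}_i=\sum_{i=1}^I \text{Tr}\!\left(\mathbf{W}\mathbf{s}_i\mathbf{s}_i^T\mathbf{W}^T\right)=\text{Tr}\!\left(\mathbf{W}\boldsymbol{\Omega}\mathbf{W}^T\right),
\]
with $\boldsymbol{\Omega}=\sum_{i=1}^I\mathbf{s}_i\mathbf{s}_i^T$, which is precisely the right-hand side of the claimed identity. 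This is the standard PCA objective, whose optimal value is attained by the matrix whose rows are the $M$ dominant eigenvectors of $\boldsymbol{\Omega}$ via the Rayleigh--Ritz/Ky Fan characterization, delivering the closed-form solution.
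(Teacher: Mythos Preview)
Your proposal is correct and follows essentially the same approach as the paper: both arguments show that the objective depends on $\mathbf{W}$ only through the row space (equivalently, through the projector $\mathbf{P_w}$), so that restricting to $\mathbf{W}\mathbf{W}^T=\mathbf{I}_M$ loses no optimality, after which the trace rewrite is immediate. The only cosmetic difference is the device used to produce an orthonormal representative of the row space: the paper invokes Gram--Schmidt/QR to write $\mathbf{W}^T=\mathbf{W}_{ort}^T\mathbf{R}^T$ and then verifies $\mathbf{P_w}=\mathbf{W}_{ort}^T\mathbf{W}_{ort}$ directly, whereas you first prove the more general invariance $\mathbf{P}_{\mathbf{QW}}=\mathbf{P_w}$ for any invertible $\mathbf{Q}$ and then specialize to the whitening choice $\mathbf{Q}=(\mathbf{W}\mathbf{W}^T)^{-1/2}$.
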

 \begin{proof}
 To prove the lemma, first we show that we do not lose optimality if we constrain our search space so that $\mathbf{W}^T \in \mathbf{S}_M^N$. Observe that $\mathbf{P_w}=\mathbf{W}^T ( \mathbf{W}\mathbf{W}^T)^{-1} \mathbf{W}$ is a projection matrix. Using properties of projection matrices, $\left(\mathbf{P_w}\right)^2=\mathbf{P_w}$ and $\mathbf{P_w}=\mathbf{P_w}^T$ \cite{HandJ}, the objective function can be rewritten as,
\begin{align}\label{eq:lemma1}
\underset{\mathbf{W}}{\text{maximize}} \sum_{i=1}^I \norm{\mathbf{P_w}\mathbf{s}_i}_2^2.
\end{align}Now, using Gram-Schmidt orthogonalization \cite{HandJ}, we can write $\mathbf{W}^T$ as $\mathbf{W}_{ort}^T \mathbf{R}^T$, where $\mathbf{W}_{ort}\mathbf{W}_{ort}^T=\mathbf{I}_M$ and $\mathbf{R}^T$ is an upper triangular matrix. As a result,
\begin{align}
\mathbf{P_w} =& \mathbf{W}_{ort}^T \mathbf{R}^T \left( \mathbf{R} \mathbf{W}_{ort} \mathbf{W}_{ort}^T \mathbf{R}^T\right)^{-1}\mathbf{R} \mathbf{W}_{ort}\\ 
 \stackrel{(a)}{=}& \mathbf{W}_{ort}^T \mathbf{R}^T(\mathbf{R}\mathbf{R}^T)^{-1} \mathbf{R}\mathbf{W}_{ort}\\
 =& \mathbf{W}_{ort}^T \mathbf{W}_{ort}
\end{align}\
where $(a)$ follows from $\mathbf{W}_{ort}\mathbf{W}_{ort}^T=\mathbf{I}_M$. The optimization problem can then be expressed as,
\begin{align*}
\underset{\mathbf{W}}{\text{max}} \sum_{i=1}^I \norm{\mathbf{P_w}\mathbf{s}_i}_2^2 =& \underset{\mathbf{W}^T \in \mathbf{S}_M^N}{\text{max}} \sum_{i=1}^I \mathbf{s}_i^T \mathbf{W}^T \mathbf{W}\mathbf{s}_i\\
=&\underset{\mathbf{W}^T \in \mathbf{S}_M^N}{\text{max}} ~\text{Tr}\left( \mathbf{W} \boldsymbol{\Omega}\mathbf{W}^T  \right).
\end{align*}
which is equivalent to the PCA formulation.
 \end{proof}
\begin{lem}\label{lem:PCA}
The optimal solution to the C-DC maximization problem $\underset{\mathbf{W}^T \in \mathbf{S}_M^N}{\text{max}} ~\text{Tr}\left( \mathbf{W} \boldsymbol{\Omega}\mathbf{W}^T  \right)$ is given as
\begin{align}
\mathbf{W}^T_{opt}=\text{M-evecs}(\boldsymbol{\Omega}),  
\end{align} 
where M-evecs $(\boldsymbol{\Omega})$ refers to the eigenvectors corresponding to the $M$ largest eigenvalues of $\boldsymbol{\Omega}$.  
\end{lem}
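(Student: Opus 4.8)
The plan is to reduce the trace maximization to a transparent linear program by diagonalizing $\boldsymbol{\Omega}$ and then exploiting the structure of orthogonal projections. Since $\boldsymbol{\Omega}=\sum_{i=1}^I \mathbf{s}_i\mathbf{s}_i^T$ is symmetric and positive semidefinite, I would first write its spectral decomposition $\boldsymbol{\Omega}=\mathbf{U}\boldsymbol{\Lambda}\mathbf{U}^T$, where $\mathbf{U}$ is orthogonal and $\boldsymbol{\Lambda}=\diag(\lambda_1,\dots,\lambda_N)$ with $\lambda_1\geq\cdots\geq\lambda_N\geq 0$. Recall the feasible set is $\mathbf{W}^T\in\mathbf{S}_M^N$, i.e.\ $\mathbf{W}\mathbf{W}^T=\mathbf{I}_M$, so the rows of $\mathbf{W}$ form an orthonormal set. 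I would then change variables by setting $\mathbf{B}=\mathbf{W}\mathbf{U}$; because $\mathbf{U}$ is orthogonal, $\mathbf{B}\mathbf{B}^T=\mathbf{W}\mathbf{U}\mathbf{U}^T\mathbf{W}^T=\mathbf{W}\mathbf{W}^T=\mathbf{I}_M$, so $\mathbf{B}$ ranges over exactly the same Stiefel manifold and the correspondence is a bijection. A short computation then gives
\begin{align*}
\text{Tr}\left(\mathbf{W}\boldsymbol{\Omega}\mathbf{W}^T\right)=\text{Tr}\left(\mathbf{B}\boldsymbol{\Lambda}\mathbf{B}^T\right)=\sum_{j=1}^N \lambda_j c_j,
\end{align*}
where $c_j:=(\mathbf{B}^T\mathbf{B})_{jj}$ is the squared norm of the $j$th column of $\mathbf{B}$. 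This decouples the objective across the spectrum of $\boldsymbol{\Omega}$ and recasts the problem as a choice of the weights $c_j$.

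The crux is to pin down the feasible set for $(c_1,\dots,c_N)$. Here I would use the fact that $\mathbf{B}\mathbf{B}^T=\mathbf{I}_M$ forces $\mathbf{B}^T\mathbf{B}$ to be an orthogonal projection of rank $M$: it is symmetric and idempotent (the same projection argument already exploited in Theorem~\ref{lem:stiefel_manifold}), and its trace equals $M$. Consequently each diagonal entry satisfies $0\leq c_j\leq 1$, while $\sum_{j=1}^N c_j=\text{Tr}(\mathbf{B}^T\mathbf{B})=M$. Maximizing the linear objective $\sum_j \lambda_j c_j$ over this (relaxed) constraint set — a box intersected with a single sum constraint — is a linear program whose optimum places full weight $c_j=1$ on the $M$ largest eigenvalues and $c_j=0$ on the remainder, yielding the upper bound $\sum_{j=1}^M \lambda_j$.

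Finally I would verify achievability: taking $\mathbf{W}^T$ to be the $M$ eigenvectors of $\boldsymbol{\Omega}$ associated with $\lambda_1,\dots,\lambda_M$ gives $\mathbf{B}=[\mathbf{I}_M\ \mathbf{0}]$, hence $c_j=1$ for $j\leq M$ and the upper bound is attained. This establishes $\mathbf{W}^T_{opt}=\text{M-evecs}(\boldsymbol{\Omega})$ as optimal. The main obstacle I anticipate is precisely the feasibility characterization of the $c_j$: the diagonalization and achievability are routine, but the argument hinges on recognizing that the diagonal of the rank-$M$ projection $\mathbf{B}^T\mathbf{B}$ supplies exactly the majorization constraints that turn the Stiefel-manifold optimization into a solvable LP. As an alternative to the explicit LP step, one may instead invoke the Ky Fan maximum principle, which directly identifies $\sum_{j=1}^M\lambda_j$ as the maximum of $\sum_{j=1}^M \mathbf{w}_j^T\boldsymbol{\Omega}\mathbf{w}_j$ over orthonormal $\{\mathbf{w}_j\}$.
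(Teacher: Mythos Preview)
Your argument is correct and complete: the diagonalization $\boldsymbol{\Omega}=\mathbf{U}\boldsymbol{\Lambda}\mathbf{U}^T$, the bijective change of variables $\mathbf{B}=\mathbf{W}\mathbf{U}$ on the Stiefel manifold, the identification of the objective with $\sum_j\lambda_jc_j$, and the LP bound via $0\le c_j\le 1$, $\sum_j c_j=M$ together with the achievability check are all sound. The one place to be slightly careful is the claim that the diagonal entries of a rank-$M$ orthogonal projection lie in $[0,1]$; this follows because an idempotent symmetric matrix has eigenvalues in $\{0,1\}$, hence each diagonal entry, being a convex combination of eigenvalues, lies in $[0,1]$ --- you allude to this but it is worth stating explicitly since it is the linchpin of the upper bound.

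As for comparison with the paper: the paper does not actually supply a proof of this lemma. It is stated immediately after Theorem~\ref{lem:stiefel_manifold} as a direct consequence of the PCA equivalence and treated as a standard fact, with no argument given. Your write-up therefore goes well beyond what the paper offers; it is essentially the classical proof of the Ky~Fan maximum principle specialized to this setting, and the alternative you mention (invoking Ky~Fan directly) is exactly the kind of one-line citation the paper implicitly relies on.
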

We define the optimal cumulative deflection coefficient $\text{C-DC}_{\text{opt}}$ as the C-DC achieved by $\mathbf{W}^T_{opt}$ ($\text{C-DC}_{opt}$ is the C-DC obtained for cost free setting). Note that, in some specific cases the matrix $\boldsymbol{\Omega}$ can be diagonal. An example of $\boldsymbol{\Omega}$ being diagonal is when $\mathbf{s}_i$'s are of the form $\mathbf{s}_i= k_i \mathbf{e}_i$, where $k_i \in \mathbb{R}$ is an arbitrary constant and $ \mathbf{e}_i \in \mathbb{R}^N$ are the standard orthogonal basis vectors with $i^{th}$ element containing a non-zero value. In such cases, we can use the following Lemma for simplification.
\begin{lem}
If matrix $\boldsymbol{\Omega}=\sum_{i=1}^I \mathbf{s}_i\mathbf{s}_i^T$ is a diagonal matrix of rank $I$, then the optimal $\mathbf{W}=[\mathbf{W}_1 ~~\mathbf{W}_2]$, where $\mathbf{W}_1 \in M \times I$ and $\mathbf{W}_2 \in M \times (n-I)$, which maximizes the cumulative deflection coefficient C-DC, will be independent of $\mathbf{W}_2$.
\end{lem}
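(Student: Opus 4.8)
The plan is to exploit the block structure that the rank-$I$ diagonal hypothesis on $\boldsymbol{\Omega}$ forces, and to show directly that the block $\mathbf{W}_2$ is annihilated inside the C-DC objective. First I would record the structure of $\boldsymbol{\Omega}$. Since $\boldsymbol{\Omega}=\sum_{i=1}^I\mathbf{s}_i\mathbf{s}_i^T$ is diagonal of rank $I$ (for instance when $\mathbf{s}_i=k_i\mathbf{e}_i$), I would assume, as in the motivating example or after a permutation of the coordinates, that its $I$ nonzero diagonal entries occupy the first $I$ positions, so that
\begin{align*}
\boldsymbol{\Omega}=\begin{bmatrix}\boldsymbol{\Lambda}&\mathbf{0}\\\mathbf{0}&\mathbf{0}\end{bmatrix},
\end{align*}
where $\boldsymbol{\Lambda}\in\mathbb{R}^{I\times I}$ is diagonal with strictly positive diagonal entries. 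This is exactly the ordering that makes the partition $\mathbf{W}=[\mathbf{W}_1\;\mathbf{W}_2]$, with $\mathbf{W}_1\in\mathbb{R}^{M\times I}$ and $\mathbf{W}_2\in\mathbb{R}^{M\times(N-I)}$, meaningful.

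Next I would substitute this partition into the C-DC objective of Theorem \ref{lem:stiefel_manifold}. Carrying out the block multiplication gives
\begin{align*}
\mathbf{W}\boldsymbol{\Omega}\mathbf{W}^T=[\mathbf{W}_1\;\mathbf{W}_2]\begin{bmatrix}\boldsymbol{\Lambda}&\mathbf{0}\\\mathbf{0}&\mathbf{0}\end{bmatrix}\begin{bmatrix}\mathbf{W}_1^T\\\mathbf{W}_2^T\end{bmatrix}=\mathbf{W}_1\boldsymbol{\Lambda}\mathbf{W}_1^T,
\end{align*}
so that $\text{Tr}\!\left(\mathbf{W}\boldsymbol{\Omega}\mathbf{W}^T\right)=\text{Tr}\!\left(\mathbf{W}_1\boldsymbol{\Lambda}\mathbf{W}_1^T\right)$. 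The block $\mathbf{W}_2$ has been killed by the zero block of $\boldsymbol{\Omega}$ and no longer appears. By Theorem \ref{lem:stiefel_manifold}, maximizing C-DC is the same as maximizing this trace over the Stiefel manifold, so the function to be maximized depends on $\mathbf{W}_1$ alone.

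The one point that needs care, and which I expect to be the main (small) obstacle, is the orthonormality constraint $\mathbf{W}\mathbf{W}^T=\mathbf{I}_M$, which in block form reads $\mathbf{W}_1\mathbf{W}_1^T+\mathbf{W}_2\mathbf{W}_2^T=\mathbf{I}_M$ and hence still couples $\mathbf{W}_2$ to $\mathbf{W}_1$ through the feasible set. To close the argument I would show that this coupling never pins down a unique $\mathbf{W}_2$ at the optimum: for any $\mathbf{W}_1$ maximizing $\text{Tr}(\mathbf{W}_1\boldsymbol{\Lambda}\mathbf{W}_1^T)$ (these arise from the $M$-evecs of $\boldsymbol{\Omega}$ via Lemma \ref{lem:PCA}), the residual $\mathbf{I}_M-\mathbf{W}_1\mathbf{W}_1^T$ is positive semidefinite, so it admits a factorization $\mathbf{W}_2\mathbf{W}_2^T=\mathbf{I}_M-\mathbf{W}_1\mathbf{W}_1^T$; any such completion is feasible and, by the computation above, yields the same C-DC. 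Since the zero block of $\boldsymbol{\Omega}$ corresponds to the $N-I$ null directions, when $M>I$ these extra rows of $\mathbf{W}$ populate the null space and are non-unique, precisely matching the claim. Hence the optimal value and the optimal $\mathbf{W}_1$ are determined entirely by $\boldsymbol{\Lambda}$, and the maximizer is independent of $\mathbf{W}_2$, which completes the proof.
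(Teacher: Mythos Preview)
Your proposal is correct and follows essentially the same route as the paper: both arguments write $\boldsymbol{\Omega}$ in block form with the nonzero $I\times I$ block in the upper-left and then carry out the block multiplication to reduce $\text{Tr}(\mathbf{W}\boldsymbol{\Omega}\mathbf{W}^T)$ to $\text{Tr}(\mathbf{W}_1\boldsymbol{\Lambda}\mathbf{W}_1^T)$, which is independent of $\mathbf{W}_2$. Your treatment is in fact more careful than the paper's, since you explicitly address the residual coupling through the Stiefel constraint $\mathbf{W}_1\mathbf{W}_1^T+\mathbf{W}_2\mathbf{W}_2^T=\mathbf{I}_M$ and argue that any feasible completion $\mathbf{W}_2$ yields the same objective value; the paper simply stops after showing the objective is independent of $\mathbf{W}_2$.
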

\begin{proof}
Let $\boldsymbol{\Omega}_I \in \mathbb{R}^{I \times I}$ denote the curtailed matrix $\boldsymbol{\Omega}$ with all zero rows and all zero columns removed. Then P1 can be written as
\begin{align*}
& \underset{\mathbf{W}}{\text{max}}  \;\;  \text{Tr}\left( [\mathbf{W}_1~~\mathbf{W}_2] ~\boldsymbol{\Omega} \left[ \begin{array}{c}\mathbf{W}_1^T \\ \mathbf{W}_2^T \end{array}   \right]    \right) =
\underset{\mathbf{W}}{\text{max}}\;\;   \text{Tr}\left( \mathbf{W}_1 \boldsymbol{\Omega}_I \mathbf{W}_1^T \right), 
\end{align*}
which is independent of $\mathbf{W}_2$.
\end{proof} 

\subsection{Cost Efficient Collaboration Strategy Design}
The proposed cost-efficient collaboration strategy design can be expressed as
 \begin{align} \label{eq:Optimization_P1}
  \quad &\underset{\mathbf{W}}{\text{maximize}} \quad  \text{Tr}\left( \mathbf{W} \boldsymbol{\Omega}  \mathbf{W}^T  \right) \\
 &\text{subject to}\quad \mathbf{W}\mathbf{W}^T = \mathbf{I}_M \nonumber\\
 &\quad \quad \quad \quad \quad \norm{\mathbf{w}_i}_{\alpha} \leq \gamma_i,~~ \text{for}~~i=\{1,2,\cdots,M\},\nonumber
 \end{align}where, $\mathbf{w}_i$ is the norm of $i$th column of $\mathbf{W}^T$ matrix and $\alpha \in \{0,1\}$ refers to the penalty imposed. Observe that, the above problem is equivalent to the sparse PCA formulation. Solving the above constrained optimization problem is difficult in its current form. Hence, we consider the following penalized collaboration  matrix design problem with $\ell_0$-pseudo norm (loosely referred to as the $\ell_0$ norm) and $\ell_1$-norm penalties, similar to the approach reported in \cite{Journee_JMLR_2010} (Section 2.3). By defining $\boldsymbol{\Omega}=\mathbf{A}^T\mathbf{A}$, the problem with $\ell_1$ and $\ell_0$ norm penalties can be rewritten as follows\footnote{For the proof of equivalence between \eqref{eq:Optimization_P1} and (P2), please see \cite{Journee_JMLR_2010}.} \footnote{For algorithmic purposes, we assume $M \leq I \leq N$.}.

\subsubsection{Using the $\ell_1$ norm penalty}
The modified optimization problem can be written as 
\begin{align*}
P2:  \quad &\underset{\mathbf{U,W^T}}{\text{maximize}} \quad  \text{Tr}\left( \mathbf{U}^T \mathbf{A} \mathbf{W}^T \mathbf{Y}  \right)- \sum_{i=1}^M \gamma_i \sum_{j=1}^N |w_{ij}| \nonumber\\ 
& \text{subject to} \quad \mathbf{U} \in \mathbf{S}_M^I ~~ \text{and}~~ \mathbf{W}^T \in [\mathbf{S}^N]^M .
\end{align*}
Here $\mathbf{S}_M^I$ is the Stiefel manifold, $\mathbf{Y}= \text{Diag}(y_1,\cdots,y_M)$ \footnote{Having distinct elements $y_i$ in $\mathbf{Y}$ pushes towards sparse solutions that are more orthogonal, although this is not explicitly enforced.} and 
$[\mathbf{S}^N]^M=\{ \mathbf{W}^T \in \mathbb{R}^{N \times M}| \text{Diag}(\mathbf{W}\mathbf{W}^T)= \mathbf{I}_M\}$. This problem can be decoupled in columns of $\mathbf{W}^T$ as,
\begin{align}
P2(a):  \quad &\underset{\mathbf{U}}{\text{maximize}} \quad \sum_{i=1}^m \underset{\mathbf{w}_i}{\text{maximize}} ~~y_i \mathbf{u}_i^T \mathbf{A} \mathbf{w}_i - \gamma_i \norm{\mathbf{w}_i}_1  \nonumber\\ 
& \text{subject to} \quad \mathbf{U} \in \mathbf{S}_M^I ~~ \text{and}~~ \mathbf{w}_i \in \mathbf{S}^N .
\end{align}
where, $\mathbf{u}_i$ refers to the $i$th column of vector $\mathbf{U}$ and $\mathbf{S}^N=\{\mathbf{w}_i \in \mathbb{R}^N|\mathbf{w}_i^T \mathbf{w}_i=1\}$. Notice that $\mathbf{w}_i$ refers to the column of $\mathbf{W}^T$ matrix. Using the results from \cite{Journee_JMLR_2010}, the problem can be posed in a convex form as below:
\begin{align}
P2(b): \quad &\underset{\mathbf{U}}{\text{maximize}} \quad \sum_{i=1}^{M} \sum_{j=1}^{N} \left[y_i |\mathbf{a}_j^T \mathbf{u_i}|- \gamma_i  \right]^2_+   \nonumber\\
& \text{subject to} \quad \mathbf{U} \in \mathbf{S}_M^I,
\end{align}

\begin{figure}[t]
\begin{center}
{\includegraphics[height=0.25\textheight, width=0.4\textwidth]{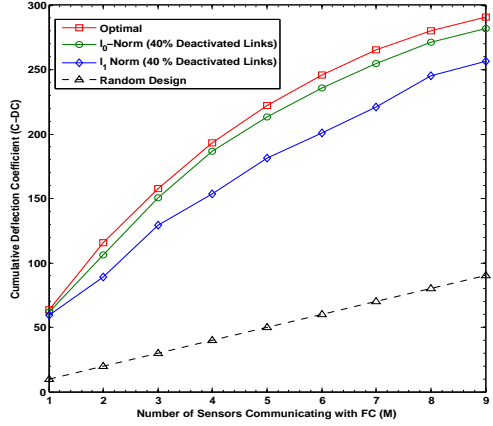}}
\caption{Cumulative deflection coefficient (C-DC) with $\ell_1$ and $\ell_0$-norm penalty vs Number of sensors capable of transmitting to FC ($M$) for, $I=10$ and $N=30$.}
\label{fig:DC_FixZerosvsM}
\end{center}
\vspace{-0.1in}
\end{figure}

\subsubsection{Using the $\ell_0$-norm penalty}
The problem can be formulated as follows,
 \begin{align*}
P3:   \quad &\underset{\mathbf{U,W^T}}{\text{maximize}} \quad  \text{Tr}\left( \text{Diag}(\mathbf{U}^T \mathbf{A} \mathbf{W}^T \mathbf{Y})^2  \right)- \sum_{i=1}^M \gamma_i \norm{\mathbf{w}_i}_0 \nonumber\\
& \text{subject to} \quad \mathbf{U} \in \mathbf{S}_M^I ~~ \text{and}~~ \mathbf{W}^T \in [\mathbf{S}^N]^M,
\end{align*}where $\norm{\mathbf{w}_i}_0$ is the norm of the $i^{th}$ column of $\mathbf{W}^T$. This problem can be decoupled in the columns of $\mathbf{W}^T$ as,
\begin{align}
P3(a):   \quad &\underset{\mathbf{U}}{\text{maximize}} \sum_{i=1}^{M} \underset{\mathbf{w}_i}{\text{maximize}} \left( y_i \mathbf{u}_i \mathbf{A} \mathbf{w}_i\right)^2 - \gamma_i \norm{\mathbf{w}_i}_0 \nonumber\\
& \text{subject to} \quad \mathbf{U} \in \mathbf{S}_M^I \text{and}~~ \mathbf{w}_i \in \mathbf{S}^N,
\end{align}
where all the notations used are as defined earlier. Again, using the results from \cite{Journee_JMLR_2010}, the problem can be posed in a convex form as below.
\begin{align}
P3(b): \quad &\underset{\mathbf{U}}{\text{maximize}} \quad \sum_{i=1}^{M} \sum_{j=1}^{N} \left[(y_i\mathbf{a}_j^T \mathbf{u_i})^2- \gamma_i  \right]_+ \nonumber\\
& \text{subject to} \quad \mathbf{U} \in \mathbf{S}_M^I.
\end{align}
While the initial formulations involved non-convex functions, we have rewritten them into a form that involve maximization of convex functions on a compact set. The dimension of the search space
is decreased enormously if the data matrix has many more columns (variables) than rows which is the case in our application of interest. We use a simple gradient-descent based approach (similar to \cite{Journee_JMLR_2010}) to solve the problems P2(b)) and (P3(b).  

\begin{figure}[t]
\begin{center}
{\includegraphics[height=0.25\textheight, width=0.4\textwidth]{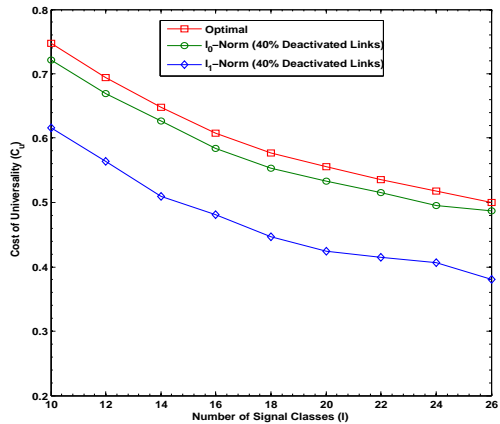}}
\caption{Cost of universality $C_u$ with $\ell_1$ and $\ell_0$-norm penalty vs Number of signal classes ($I$) for $N=30$ and $M=10$.}
\label{fig:CU_FixZerosvsP}
\end{center}
\vspace{-0.1in}
\end{figure}

\begin{figure*}[t]
  \centering
  \subfigure[$\ell_0$-norm Penalty]{
  \includegraphics[height=0.25\textheight, width=0.4\textwidth]{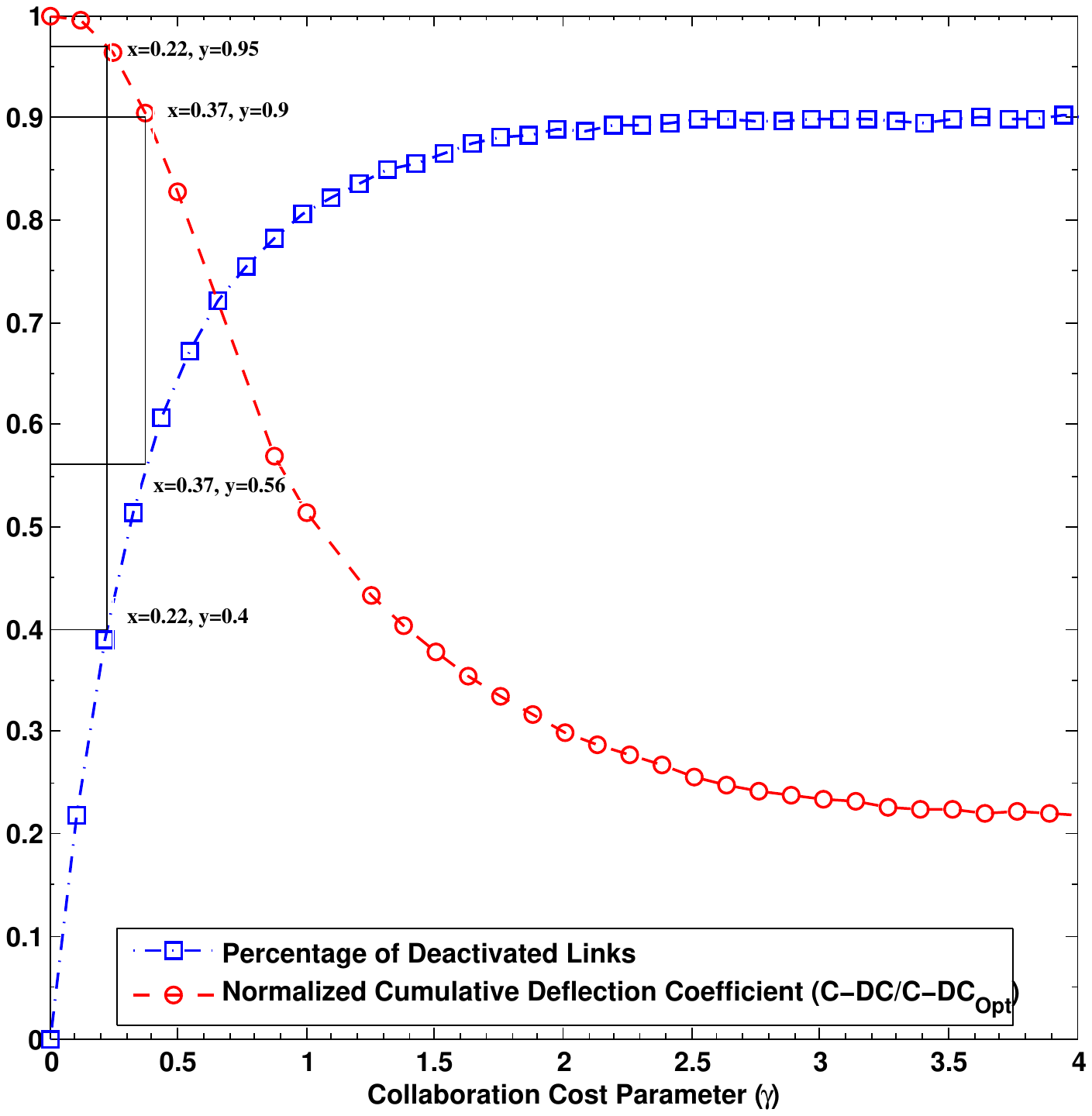} 
  \label{fig:DC_SparsityvsGammal0}}
  \quad
  \subfigure[$\ell_1$-norm Penalty]
  {\includegraphics[height=0.25\textheight, width=0.4\textwidth]{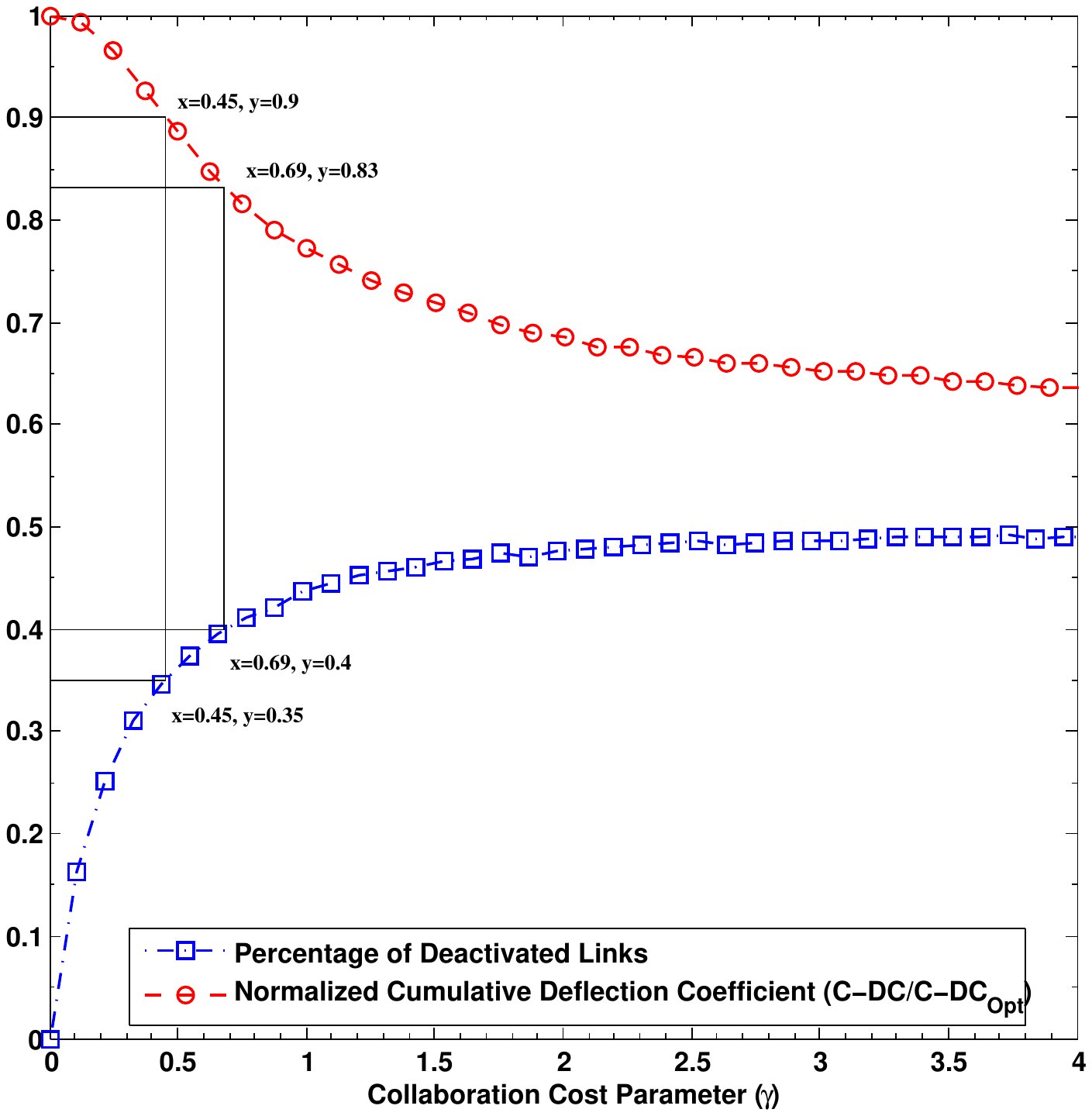}
   \label{fig:DC_SparsityvsGammal1}}
  \caption{Percentage of Deactivated links and performance with $\ell_0$ and $\ell_1$-Norm penalties, with $I=10$, $N=30$ and $M=10$.}
  \label{fig:DC_SparsityvsGamma}
  \vspace{-0.1in}
\end{figure*}
\section{Results and Discussions}
In this section, we seek to answer the following questions using empirical analysis: $1)$ How much performance gain do we obtain by optimizing for the collaboration matrices? $2)$ What is the effect of dimensionality reduction ($N$ to $M$) on detection performance? $3)$ How much performance loss will we incur by considering a universal detection system for detecting a signal from the signal class $\mathcal{U}$ as opposed to optimizing a detection system for each signal independently? and, $4)$ What is the effect of the choice of the sparsity penalty function? 

We employ Monte-Carlo simulations to analyze the performance of the proposed strategies. For simplicity, we use the same cost penalty $\gamma$ for every row of the collaboration matrix $\mathbf{W}$. Observe that, for each value of $\gamma$, we obtain a specific level of sparsity, i.e,  total number of zero entries in the optimal collaboration matrix. We also assume the matrix $\mathbf{Y}$ (in $P2$ and $P3$) to be identity. Each element of the $I$ signals $\{{\mathbf{s}}_i\}_{i=1}^I$ is drawn from the standard normal distribution and each realization serves as a known signal in the set $\mathcal{U}$. 

\subsection{Impact of Collaboration on Performance}
We illustrate the performance gains obtained by introducing collaboration in Fig. \ref{fig:DC_FixZerosvsM}. In particular, we plot C-DC against the number of sensors $M$ capable of communicating with the FC, with 40$\%$ of the links deactivated $\left(\frac{\sum_{i=1}^{M} \norm{\mathbf{w}_i}_0} {M\times N}=0.4\right)$. In addition, we show the average performance achieved with randomly drawn collaboration matrix, in accordance with Lemma \ref{lem:StableEmbedding}, without any cost constraints (100$\%$ links activated). We observe that the proposed collaboration strategy performs significantly better than the random design, even with 40$\%$ of the links deactivated. 

\subsection{Effect of Dimensionality Reduction}
From Fig. \ref{fig:DC_FixZerosvsM}, we also notice that as $M$ decreases the C-DC also degrades. Moreover, the C-DC obtained using the $\ell_0$-norm penalty with 40$\%$ of the links deactivated is very close to the optimal C-DC ($\text{C-DC}_{opt}$) where, $\text{C-DC}_{opt}$ is the cumulative deflection coefficient achieved with zero sparsity cost penalty (100$\%$ of the links activated). We also notice that cost efficient collaboration with $\ell_0$-norm penalty performs better than $\ell_1$.

\subsection{Cost of Universality}
With the same experimental settings, we obtain the cost of universality, $C_u$, computed as in Definition \ref{def:CU} by varying the number of signals, $I$, in the class $\mathcal{U}$. As $I$ increases towards $N$, $C_u$ degrades as expected. Similar to the previous cases, using the $\ell_0$-norm produces cost of universality measures very close to the optimal case, and performs significantly better than the $\ell_1$ case.


\subsection{Impact of the Sparsity Penalty Choice}
Finally, we compare the percentage of deactivated links with the normalized cumulative deflection coefficient $(\frac{\text{C-DC}}{\text{C-DC}_{\text{opt}}})$ for both $\ell_0$-norm and $\ell_1$-norm based designs. First, we consider the case where a network designer is interested in maximizing the detection performance under a certain cost budget and compare $\ell_0$-norm and $\ell_1$-norm based designs. For illustrating the comparative performance, let us consider the case where the percentage of deactivated links is fixed to be 40$\%$ for both $\ell_0$-norm and $\ell_1$-norm based designs. 
Now, from Figures \ref{fig:DC_SparsityvsGammal0} and \ref{fig:DC_SparsityvsGammal1}, we notice that the maximum detection performance in terms of normalized deflection coefficient for $\ell_0$-norm design is $0.95$ while $\ell_1$-norm design resulted in a normalized deflection coefficient of $0.83$. This pattern remains the same for different levels of sparsity. This observation suggests that the $\ell_0$-norm based design outperforms the $\ell_1$-norm based design in terms of maximizing the detection performance under a fixed cost budget.
Similarly, we consider the case where a network designer is interested in minimizing the cost of collaboration (number of communication links) while guaranteeing a certain level of detection performance. Let us consider the case where the normalized C-DC is fixed to be $0.9$ for both $\ell_0$-norm and $\ell_1$-norm based designs. We observe that for the $\ell_0$-norm based design the maximum number of links that can be deactivated is 56$\%$ in comparison to 35$\%$ in the case of $\ell_1$-based design, evidencing a similar behavior. 



\section{Summary}
We considered the problem of designing universal collaboration strategies for high-dimensional signal detection under both cost-free and finite cost constraint models. By establishing the equivalence between collaboration matrix design and sparse PCA formulations, we adopted tools from the sparse learning literature to efficiently solve the problem. To this end, we also defined new metrics to measure performance, and quantify costs for collaboration and universality. We observed that the proposed collaboration strategies provide significant gains in detection performance in comparison to benchmark random designs. Furthermore, we demonstrated the trade-off between dimensionality reduction and the cost of collaboration ($\gamma$) to achieve desired detection performance. Finally, we analyzed the impact of the choice of sparsity penalty on the collaboration matrix design and found that the $\ell_0$-norm consistently produces superior results.

\section{Acknowledgement}
This work was supported in part by ARO under Grant W911NF-14-1-0339.
\bibliographystyle{IEEEtran}
\bibliography{IEEEabrv,References}
\end{document}